\newcommand{\eps}{\varepsilon}
\newcommand{\disc}{\mathsf{disc}}
\DeclareMathOperator{\kde}{\textsc{kde}}
\DeclareMathOperator{\poly}{\mathsf{poly}}
\newcommand{\dir}[1]{\ensuremath{\mathsf{d}#1}}
\newcommand{\one}{\mathbbm{1}}
\newcommand{\km}{\ensuremath{\hat{\mu}}}
\renewcommand{\H}{\ensuremath{{\Eu{H}_K}}}
\newcommand{\Rc}{\ensuremath{{\mathbbm{1}}_R}}
\newcommand{\Eu}[1]{\ensuremath{\EuScript{#1}}}
\newcommand{\R}{\ensuremath{\mathbb{R}}}
\newcommand{\norm}[1]{\left\lVert#1\right\rVert}
\newcommand{\normr}[1]{\left\lVert#1\right\rVert_{\Eu{H}_K}}
\newcommand\numberthis{\addtocounter{equation}{1}\tag{\theequation}}
\newcommand{\abs}[1]{\left| #1 \right|}
\title{Improved Coresets for Kernel Density Estimates}
\author{Jeff M. Phillips\thanks{Thanks to supported by NSF CCF-1350888, IIS-1251019, ACI-1443046, CNS-1514520, and CNS-1564287.} 
     \\University of Utah 
     \and WaiMing Tai
	\\ University of Utah}
\begin{document}

\begin{titlepage}
\maketitle

\begin{abstract}
We study the construction of coresets for kernel density estimates.  That is we show how to approximate the kernel density estimate described by a large point set with another kernel density estimate with a much smaller point set.  For characteristic kernels (including Gaussian and Laplace kernels), our approximation preserves the $L_\infty$ error between kernel density estimates within error $\eps$, with coreset size $2/\eps^2$, but no other aspects of the data, including the dimension, the diameter of the point set, or the bandwidth of the kernel common to other approximations.  When the dimension is unrestricted, we show this bound is tight for these kernels as well as a much broader set.  

This work provides a careful analysis of the iterative Frank-Wolfe algorithm adapted to this context, an algorithm called \emph{kernel herding}.  This analysis unites a broad line of work that spans statistics, machine learning, and geometry.  

When the dimension $d$ is constant, we demonstrate much tighter bounds on the size of the coreset specifically for Gaussian kernels, showing that it is bounded by the size of the coreset for axis-aligned rectangles.  Currently the best known constructive bound is $O(\frac{1}{\eps} \log^d \frac{1}{\eps})$, and non-constructively, this can be improved by $\sqrt{\log \frac{1}{\eps}}$.  This improves the best constant dimension bounds polynomially for $d \geq 3$.  
\end{abstract}
\end{titlepage}

\section{Introduction}

A kernel density estimate~\cite{Par62} of a point set $P \subset \mathbb{R}^d$ smooths out the point set to create a continuous function $\kde_P : \mathbb{R}^d \to \mathbb{R}$.  
This object has a rich history and many applications in statistical data analysis~\cite{Sil86,DG84,Sco92}, with many results around the question of if $P$ is drawn iid from an unknown distribution $\psi$, how well can $\kde_P$ converge to $\psi$ as a function of $|P|$ (mainly in the $L_2$~\cite{Sil86,Sco92} and $L_1$~\cite{DG84} sense).  

Then kernel techniques in machine learning~\cite{SS02} developed the connection of kernel density estimates to reproducing kernel Hilbert spaces (RKHS), which are infinite dimensional function spaces (each $\kde_P$ is a point in such a space).  From these techniques grew much of non-linear data analysis (e.g., kernel PCA, kernel SVM).  In particular, an object in the RKHS called the \emph{kernel mean} is another representation of $\kde_P$, and its sparse approximation plays a critical role in distribution hypothesis testing~\cite{GBRSS12,HBCM13}, Markov random fields~\cite{CWS10}, and even political data analysis~\cite{FWS15}.  Through a simple argument (described below), the standard approximation of the kernel mean in the RKHS implies a $L_\infty$ approximation bound of the kernel density estimate in $\mathbb{R}^d$~\cite{CWS10,SZSGS08} (which is stronger than the $L_1$ and $L_2$ variants~\cite{ZP15}).

More recently, the sparse approximation of a kernel density estimate has gained interest from the computational geometry community for its connections in topological data analysis~\cite{PWZ15,FLRWBS14}, coresets~\cite{Phi13}, and discrepancy theory~\cite{harvey2014near}.

In this paper, we provide strong connections between all of these storylines, and in particular provide a simpler analysis of the common sparse kernel mean approximation techniques with application to the strong $L_\infty$-error coresets of kernel density estimates.  With unrestricted dimensions, we show our bounds for KDEs are tight, and in constant dimensions of at least $3$, we polynomially improve the best known bounds so they are now tight up to poly-log factors.    

\paragraph{Formal definitions.}
For a point set $P \subset \mathbb{R}^d$ of size $n$ and a kernel $K : \mathbb{R}^d \times \mathbb{R}^d \to \mathbb{R}$, a \emph{kernel density estimate} $\kde_P$ at $x \in \mathbb{R}^d$ is defined 
$
 \kde_P(x) = \frac{1}{|P|}\sum_{p \in P} K(x,p).
$
Our goal is to construct a subset $Q \subset P$, and bound its size, so that its $\kde$ has $\eps$-bounded $L_\infty$ error:
\[
\| \kde_P - \kde_Q \|_\infty = \max_{x \in \mathbb{R}^d} \left| \kde_P(x) - \kde_Q(x) \right| \leq \eps.
\]
We call such a subset $Q$ an \emph{$\eps$-coreset of a kernel range space $(P,\Eu{K})$} (or just an \emph{$\eps$-kernel coreset} for short), where $\Eu{K}$ is the set of all functions $K(x,\cdot)$ represented by a fixed kernel $K$ and an arbitrary center point $x \in \mathbb{R}^d$.

While there is not one standard definition of a kernel, many of these kernels have properties that unite them.  
Common examples are 
the Gaussian kernel $K(x,p) = \exp(-\|x-p\|^2/\sigma^2)$, 
the Laplace kernel $K(x,p) = \exp(-\|x-p\|/\sigma)$, 
the ball kernel $K(x,p) = \{1$ if  $\|x-p\|/\sigma \leq 1$; and $0$ otherwise\}, 
and 
the triangle kernel $K(x,p) = \max \{0, 1 - \|x-p\|/\sigma\}$.  
The parameter $\sigma$ is often called the \emph{bandwidth} and controls the level of smoothing.  
All of these kernels (and indeed most) are \emph{shift invariant}, thus can be written with a single input $z = \|x-p\|$ as a $f(z = \|x-p\|) = K(x,p)$.  We have chosen to normalize all kernels so $f(0) = K(x,x) = 1$.

The \emph{kernel distance}~\cite{HB05,glaunesthesis,
JoshiKommarajuPhillips2011,PhillipsVenkatasubramanian2011} (also called \emph{current distance} or \emph{maximum mean discrepancy}) is a metric~\cite{muller1997integral,SGFSL10} between two point sets $P$, $Q$ (as long as the kernel used is characteristic~\cite{SGFSL10}, a slight restriction of being positive definite~\cite{aronszajn1950theory,Wah99}, this includes the Gaussian and Laplace kernels).
Define the similarity between the two point sets as 
$
\kappa(P,Q) = \frac{1}{|P|}\frac{1}{|Q|} \sum_{p \in P} \sum_{q \in Q} K(p,q),
$ 
and the kernel distance as 
$
D_K(P,Q) = \sqrt{\kappa(P,P) + \kappa(Q,Q) - 2 \kappa(P,Q)}.
$
When $Q$ is a single point $x$, then $\kappa(P,x) = \kde_P(x)$.  

If $K$ is positive definite, it is said to have the reproducing property~\cite{aronszajn1950theory,Wah99}.
This implies that $K(p,x)$ is an inner product in a reproducing kernel Hilbert space (RKHS) $\H$.  Specifically, there exists a lifting map $\phi : \mathbb{R}^d \to \H$ so $K(p,x) = \langle \phi(p), \phi(x) \rangle_{\H}$, and moreover the entire set $P$ can be represented as $\Phi(P) = \sum_{p \in P} \phi(p)$, which is a single element of $\H$ and has norm $\|\Phi(P)\|_{\H} = \sqrt{\kappa(P,P)}$.  A single point $x \in \mathbb{R}^d$ also has a norm $\|\phi(x)\|_{\H} = \sqrt{K(x,x)} = 1$ in this space.  
A \emph{kernel mean} of a point set $P$ and a reproducing kernel $K$ is defined 
\[
\km_P = \frac{1}{|P|} \sum_{p \in P} \phi(p) = \Phi(P)/|P| \in \H.
\]
Note that $\|\km_P\|_{\H} \leq K(x,x)$ so in our setting $\|\km_P\|_{\H} \leq 1$.  
Also $D_K(P,Q) = \|\km_P - \km_Q\|_\H$.  

\vspace{-.1in}
\paragraph{Relationship between kernel mean and $\eps$-kernel coresets.}
It is possible to convert between bounds on the subset size required for the kernel mean and an $\eps$-kernel coreset of an associated kernel range space.  But they are not symmetric.  

The Koksma-Hlawka inequality (in the context of reproducing kernels~\cite{CWS10,SZSGS08} when $K(x,x) = 1$) states that
\[
\|\kde_P - \kde_Q\|_\infty \leq  \|\km_P - \km_Q\|_{\Eu{H}_K}.
\]
Since $\kde_P(x) = \kappa(\hat{P}, x) = \langle \km_P, \phi(x) \rangle_\H$ and via Cauchy-Schwartz, for any $x \in \mathbb{R}^d$  
\[
| \kde_P(x) - \kde_Q(x) |
=
| \langle \km_P , \phi(x) \rangle_\H - \langle \km_Q, \phi(x) \rangle_\H |
= 
| \langle \km_P - \km_Q, \phi(x) \rangle_\H |
\leq 
\| \km_P - \km_Q \|_\H.  
\]
Thus to bound $\max_{x \in \mathbb{R}^d} |\kde_P(x) - \kde_Q(x)| \leq \eps$ it is sufficient to bound $\|\km_P - \km_Q\|_\H \leq \eps$.  

On the other hand, if we have a bound $\max_{x \in \mathbb{R}^d} |\kde_P(x) - \kde_Q(x)| \leq \eps$, then we can only argue that $\|\km_P - \km_Q\|_\H \leq \sqrt{2 \eps}$.  
We observe that
\begin{align*}
\|\km_P - \km_Q\|_\H^2 
&= 
D_K(P,Q)^2 = \kappa(P,P) + \kappa(Q,Q) - 2 \kappa(P,Q)
\\ & =
\frac{1}{|P|} \sum_{p \in P} \kde_P(p) + \frac{1}{|Q|} \sum_{q \in Q} \kde_Q(q)  - \frac{1}{|P|} \sum_{p \in P} \kde_Q(p)  - \frac{1}{|Q|} \sum_{q \in Q} \kde_P(q)
\\ &=
\frac{1}{|P|} \sum_{p \in P} (\kde_P(p) - \kde_Q(p)) + \frac{1}{|Q|} \sum_{q \in Q} (\kde_Q(q) - \kde_P(q))
\\ & \leq
\frac{1}{|P|} \sum_{p \in P} (\eps) + \frac{1}{|Q|} \sum_{q \in Q} (\eps)
= 2\eps.  
\end{align*}
We can also take the only inequality the other direction to get the lower bound.  Taking a square root of both sides leads to the implication.  

Unfortunately, the second reduction does not map the other way; a bound on $\|\km_P - \km_Q\|_H^2$ only ensures an average ($L_2$ error) for $\kde_P$ holds, not the desired stronger $L_\infty$ error.

\subsection{Known Results on KDE Coresets}
In this section we survey known bounds on the size $|Q|$ required for $Q$ to be an $\eps$-kernel coreset of the kernel range space $(P,\Eu{K})$.  
We assume $P \subset \mathbb{R}^d$, it is of size $n$, and $P$ has a diameter 
$\Delta  = (1/\sigma) \max_{p, p' \in P} \|p-p'\|$, 
where $\sigma$ is the bandwidth parameter of the kernel.  
We sometimes allow a $\delta$ probability that the algorithm does not succeed.  
Results are summarized in Table \ref{tbl:compare}.  

\begin{table}[t]
\begin{tabular}{|r|c|c|l|}
\hline
Paper & Coreset Size & Runtime & Restrictions 
\\ \hline 
Joshi \etal~\cite{JoshiKommarajuPhillips2011} & $(1/\eps^2)(d + \log(1/\delta))$ & $|Q|$ samples & centrally symmetric, positive
\\
Fasy \etal~\cite{FLRWBS14} &  $(d/\eps^2) \log(d \Delta / \eps \delta)$ & $|Q|$ samples & ..
\\ 
Gretton \etal~\cite{GBRSS12} & $(1/\eps^4) \log(1/\delta)$ & $|Q|$ samples & characteristic kernels
\\ \hline
Phillips~\cite{Phi13} & $(1/\eps\sigma)^{2d/(d+2)} \log^{d/(d+2)} (1/\eps\sigma)$ & $n/\eps^2$ & $(1/\sigma)$-Lipschitz, \; $d$ is constant
\\
Phillips~\cite{Phi13} & $1/\eps$ & $n \log n$ & $d=1$
\\ \hline
Chen \etal~\cite{CWS10} & $1/(\eps r_P)$ & $n/(\eps r_P)$ & characteristic kernels
\\ 
Bach \etal~\cite{BLO12} & $(1/r_P^2)\log (1/\eps)$ & $n\log(1/\eps)/r_P^2$ & characteristic kernels   
\\
Bach \etal~\cite{BLO12} & $1/\eps^2$ & $n/\eps^2$ & characteristic kernels, weighted
\\
Harvey  and Samadi~\cite{harvey2014near} & $(1/\eps)\sqrt{n}\log^{2.5}(n)$ & $\mathsf{poly}(n,1/\eps,d)$ & characteristic kernels
\\
Cortez and Scott \cite{CS15} & $k_0$ \;\; ($\leq (\Delta/ \eps)^d$) & $nk_0$ & $(1/\sigma)$-Lipschitz; \; $d$ is constant
\\ \hline
\textbf{New Result} &  $1/\eps^2$ &  $n/\eps^2$ & characteristic kernels, \emph{unweighted}
\\
\textbf{New Result} &  $(1/\eps) \log^d \frac{1}{\eps}$ &  $n  + \poly(1/\eps)$ & Gaussian, $d$ is constant
\\
\textbf{New Lower Bound} &  $\Omega(1/\eps^2)$ &  $-$ & \textsc{siss} (e.g., Gaussian); $d = \Omega(1/\eps^2)$  
\\\hline
\end{tabular} 
\vspace{-.1in}
\caption{Asymptotic $\eps$-kernel coreset sizes and runtimes
in terms of $\eps$, $n$, $d$, $r_P$, $\sigma$, $\Delta$.   
\textsc{siss} = Shift-invariant, somewhere-steep (see Section \ref{sec:LB}).    
\label{tbl:compare}}
\vspace{-.1in}
\end{table}

\paragraph{Halving approaches.}
Phillips~\cite{Phi13} showed that kernels with a bounded Lipschitz factor (so $|K(x,p) - K(x,q)| \leq C \|p-q\|$ for some constant $C$, including Gaussian, Laplace, and Triangle kernels which have $C = O(1/\sigma)$, admit coresets of size $O((1/\eps\sigma) \sqrt{\log(1/\eps\sigma)})$ in $\mathbb{R}^2$.  For points in $\mathbb{R}^d$ (for $d>1$) this generalizes to a bound of $O((1/\eps\sigma)^{2d/(d+2)} \log^{d/(d+2)} (1/\eps\sigma))$.  
That paper also observed that for $d=1$, selecting evenly spaced points in the sorted order achieves a coreset of size $O(1/\eps)$.  

\paragraph{Sampling bounds.}
Joshi \etal~\cite{JoshiKommarajuPhillips2011} showed that a random sample of size $O((1/\eps^2)(d + \log(1/\delta)))$ results in an $\eps$-kernel coreset for any centrally symmetric, non-increasing kernel.  This works by reducing to a VC-dimensional~\cite{LLS01} argument with ranges defined by balls.  

Fasy~\etal~\cite{FLRWBS14} 
provide an alternative bound on how random sampling preserves the $L_\infty$ error in the context of statistical topological data analysis.  Their bound can be converted to require size  
$O((d/\eps^2) \log(d \Delta/\eps \delta))$, which can improve upon the bound of Joshi~\cite{JoshiKommarajuPhillips2011} if $K(x,x) > 1$ (otherwise, herein we only consider the case $K(x,x) =1$).  
 
Examining characteristic kernels which induce an RKHS in that function space leads to a simpler bound of $O((1/\eps^4) \log(1/\delta))$ as observed by Gretton \etal~\cite{GBRSS12}.   This shows that after $O((1/\eps')^2 \log(1/\delta))$ samples $Q$, then $\|\km_P - \km_Q\|_\H^2 \leq \eps'$.  To bound the non-squared distance by $\eps$ we set $\eps' = \eps^2$, and obtain the bound of $O((1/\eps^4) \log(1/\delta))$.

\paragraph{Iterative approaches.}
Motivated by the task of constructing samples from Markov random fields, Chen \etal~\cite{CWS10} introduced a technique called \emph{kernel herding} suitable for characteristic kernels.  
They showed that iteratively and greedily choosing the point $p \in P$ which when added to $Q$ most decreases the quantity $\|\km_P - \km_Q\|_{\Eu{H}_K}$, will decrease that term at rate $O(r_P/t)$ for $t = |Q|$.  Here $r_P$ is the largest radius of a ball centered at $\km_P \in \H$ which is completely contained in the convex hull of the set $\{\phi(p) \mid p \in P\}$.  
They did not specify the quantity $r_P$ but argued that it is a constant greater than $0$.  

Bach \etal~\cite{BLO12} showed that this algorithm can be interpreted under the Frank-Wolfe framework~\cite{Cla10}.  Moreover, they argue that $r_P$ is not always a constant; in particular when $P$ is infinite (e.g., it represents a continuous distribution) then $r_P$ is arbitrarily large.  
However, when $P$ is finite, they prove that $r_P$ is finite without giving an explicit bound.  
They also makes explicit that after $t$ steps, they achieve $\|\km_P - \km_{Q,w}\|_{\Eu{H}_K} \leq 4 / (r_P \cdot t)$.  They also describe a method which includes ``line search'' to create a weighted coreset $(Q,w)$, so each point $q \in Q$ is associated with a weight $w(q) \in [0,1]$ so $\sum_{q \in Q} w(q) = 1$; then $\km_{Q,w} = \sum_{q \in Q} w(q) \phi(q)$.  For this method they achieve 
$
\|\km_P - \km_{Q,w}\|_{\Eu{H}_K} \leq \sqrt{\exp(-r_P^2 t)}.
$

Bach \etal~\cite{BLO12} also mentions a bound $\|\km_P - \km_{Q,w}\|_{\Eu{H}_K} \leq \sqrt{8 / t}$, that is independent of $r_P$.  It relies on very general bound of Dunn~\cite{dunn1980convergence} which uses line search, or one of Jaggi~\cite{Jag13} which uses a fixed but non-uniform set of weights.  These show linear convergence for any smooth function, including $\|\km_P - \km_{Q,w}\|^2_{\Eu{H}_K}$; taking the square root provides a bound for $\|\km_P - \km_{Q,w}\|_{\Eu{H}_K} \leq \eps$ after $t = O(1/\eps^2)$ steps.  However, the result is a weighted coreset $(Q,w)$ which may be less intuitive or harder to work with in some situations.  

Harvey and Samadi ~\cite{harvey2014near} further revisited kernel herding in the context of a general mean approximation problem in $\mathbb{R}^{d'}$.  That is, consider a set $P'$ of $n$ points in $\mathbb{R}^{d'}$, find a subset $Q' \subset P'$ so that $\|\bar P' - \bar Q'\| \leq \eps$, where $\bar P'$ and $\bar Q'$ are the Euclidean averages of $P'$ and $Q'$, respectively.  This maps to the kernel mean problem with $P' = \{\phi(p) \mid p \in P\}$, and with the only bound of $d'$ as $n$.  
They show that the $r_P$ term can be manipulated by affine scaling, but that in the worst case (after such transformations via John's theorem) it is $O(\sqrt{d'} \log^{2.5} (n))$, and hence 
show one can always set $\eps = O(\sqrt{d'} \log^{2.5} (n) / t) = O((1/t) \sqrt{n} \log^{2.5}(n))$.  We will show that we can always compress $P'$ to another set $P''$ of size $n = O(1/\eps^2)$ (or for instance use the random sampling bound of Joshi \etal~\cite{JoshiKommarajuPhillips2011}, ignoring other factors); then solving for $t$ yields $t = O((1/\eps^2) \log^{2.5}(1/\eps))$.  

Harvey and Samadi also provide a lower bound to show that after $t$ steps, the kernel mean error may be as large as $\Omega(\sqrt{d'}/t)$ when $t = \Theta(n)$.  
This seems to imply (using the $d' = \Omega(n)$ and a $P'$ of size $\Theta(1/\eps^2)$) that we need $t = \Omega(1/\eps^2)$ steps to achieve $\eps$-error for kernel density estimates.  But this would contradict the bound of Phillips~\cite{Phi13}, which for instance shows a coreset of size $O((1/\eps) \sqrt{\log (1/\eps)})$ in $\mathbb{R}^2$.  More specifically, it uses $t = \Theta(d')$ steps to achieve this case, so if $d' = n = \Theta(1/\eps^2)$ then this requires asymptotically as many steps as there are points.  Moreover, a careful analysis of their construction shows that the corresponding points in $\mathbb{R}^d$ (using an inverse projection $\phi^{-1} : \Eu{H}_K \to \mathbb{R}^d$ to a set $P \in \mathbb{R}^d$) would have them so spread out that $\kde_P(x) < c/\sqrt{n}$ (for constant $c$, so $= O(\eps)$ for $n = 1/\eps^2$) for all $x \in \mathbb{R}^d$; hence it is easy to construct a $2/\eps$ size $\eps$-kernel coreset for this point set.

\paragraph{Discretization bounds.}
Another series of bounds comes from the Lipschitz factor of the kernels:  $C = \max_{x,y,z \in \mathbb{R}^d} \frac{K(z,x) - K(z,y)}{\|x-y\|}$.  For most kernels $C$ is small constant.  This implies that 
$\max_{x,y \in \mathbb{R}^d} \frac{\kde_P(x) - \kde_P(y)}{\|x-y\|} \leq C$ for any $P$.  
Thus, we can for instance, lay down an infinite grid $G_{\eps} \subset \mathbb{R}^d$ of points so for all $x \in \mathbb{R}^d$ there exists some $g \in G_\eps$ such that $\|g-x\| \leq \eps$, and  no two $g, g' \in G_\eps$ are $\|g-g'\| \leq 2\eps \sqrt{d}$.  

Then we can map each $p \in P$ to $p_g$ the closest point $g \in G_\eps$ (with multiplicity), resulting in $P_G$.  By the additive property of $\kde$, we know that $\|\kde_P - \kde_{P_G}\|_\infty \leq \eps$.  

Cortes and Scott~\cite{CS15} provide another approach to the sparse kernel mean problem.  They run Gonzalez's algorithms~\cite{Gon85} for $k$-center on the points $P \in \mathbb{R}^d$ (iteratively add points to $Q$, always choosing the furthest point from any in $Q$) and terminate when the furthest distance to the nearest point in $Q$ is $\Theta(\eps)$.  Then they assign weights to $Q$ based on how many points are nearby, similar to in the grid argument above.  They make an ``incoherence'' based argument, specifically showing that $\|\km_P - \km_Q\| \leq \sqrt{1-v_Q}$ where $v_Q = \min_{p \in P} \max_{q \in Q} K(p,q)$.  This does not translate meaningfully in any direct way to any of the parameters we study.  However, we can use the above discretization bound to argue that if $\Delta$ is bounded, then this algorithm must terminate in $O((\Delta/\sigma \eps)^d)$ steps.

\paragraph{Lower bounds.}
Finally, there is a simple lower bound of size $\lceil 1/\eps \rceil - 1$ for an $\eps$-coreset $Q$ for kernel density estimates~\cite{Phi13}.  Consider a point set $P$ of size $1/\eps-1$ where each point is very far away from every other point, then we cannot remove any point otherwise it would create too much error at that location.

\subsection{Our Results}
We have three main results.  
First, in Section \ref{sec:FW}, we study the kernel herding algorithm for characteristic kernels, and show that after $2/\eps^2$ steps (with no other parameters) it creates a subset $Q \subset P$ so that $\|\km_P - \km_Q\|_\H \leq \eps$, and hence $\|\kde_P - \kde_Q\|_\infty \leq \eps$.  Our result is simple and from first principles, and does not required a weighted coreset, unlike Bach \etal~\cite{BLO12}.  

Second, in Section \ref{sec:Gauss-rect}, we prove a new discrepancy reduction, that shows a form of kernel discrepancy for Gaussian kernels is implied by the same coloring with respect to the very commonly studied axis-aligned rectangle range space.  As a result, this allows us to apply a halving approach to create $\eps$-kernel coresets which are of size $O((1/\eps) \log^d \frac{1}{\eps})$ for $d$ constant, polynomially improving upon all bounds for $d \geq 3$.  

Third, in Section \ref{sec:LB}, we show a lower bound, that there exist point sets $P$ in dimension $\Omega(1/\eps^2)$, such that any $\eps$-kernel coreset requires $\Omega(1/\eps^2)$ points.  Because this construction uses $\Omega(1/\eps^2)$ dimensions, it does not contradict the halving-based results.  This applies to every shift-invariant kernel we considered, with a slightly weakened condition for the ball kernel.

\section{Analysis of Frank-Wolfe Algorithm for $\eps$-Kernel Coresets}
\label{sec:FW}

Our first contribution analyzes the Frank-Wolfe algorithm~\cite{FW56} in the context of estimating the kernel mean $\km_P$.  To simplify notation, let $\mu = \km_P$, and for each original point $p_i \in P$ we denote $\phi_i = \phi(p_i) \in \H$ as its representation in $\H$.  Our estimate $\km_Q$ for $\km_P$ will change each step; on the $t$th step it will be labeled $x_t \in \H$.  The algorithm is then summarized in Algorithm \ref{alg:fw}.  

\begin{algorithm}
	\caption{\label{alg:fw} Frank-Wolfe Algorithm}
	\begin{algorithmic} 
		\STATE $x_1\leftarrow \text{any of }\phi_i$
		\FOR {$t=1,2,\ldots,T$}
		\STATE $i_t\leftarrow\arg\min_{i\in\{1,\dots,n\}} \langle x_t-\mu, \phi_i-\mu\rangle_\H$
		\STATE $x_{t+1}\leftarrow \frac{1}{t} \phi_{i_t}+\frac{t-1}{t} x_t$ 
		\ENDFOR
		\STATE \textbf{return} $x_T = \km_Q$. 
	\end{algorithmic}
\end{algorithm}

Before we begin our own analysis specific to approximating the kernel mean, we note that there exists rich analysis of different variants of the Frank-Wolfe algorithm.  In many cases with careful analysis of the eccentricity~\cite{GJ09} or convexity~\cite{BLO12} or dual structure~\cite{Cla10} of the problem, one can attain convergence at a rate of $O(1/t)$ (hiding structural terms in the $O(\cdot)$, where random sampling typically achieves a slower rate of $O(1/\sqrt{t}$).  Recent progress has focused mainly on approximate settings~\cite{FG16} or situations where one can achieve a ``linear'' rate of roughly $O(c^{-t})$~\cite{JL15}.  This faster linear convergence, unless some specific properties of the data existed, would violate our lower bound, and thus is not possible in general.  

\begin{theorem}\label{avg}
After $T \geq 2/\eps^2$ steps, Algorithm \ref{alg:fw} produces $\km_Q$ with $T$ points so $\normr{\km_Q - \km_P} \leq \eps$.
\end{theorem}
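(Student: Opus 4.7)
The plan is to control the squared error $d_t := \|x_t - \mu\|_\H^2$ and establish the invariant $d_{t+1} \leq 2/t$, which with $T \geq 2/\eps^2$ immediately gives $\normr{\km_Q - \km_P} \leq \eps$. First, unrolling the update $x_{t+1} = \tfrac{1}{t}\phi_{i_t} + \tfrac{t-1}{t} x_t$ from the base $x_2 = \phi_{i_1}$ shows $x_{T+1} = \tfrac{1}{T}\sum_{s=1}^T \phi_{i_s} = \km_Q$ for $Q = \{p_{i_1}, \ldots, p_{i_T}\}$, so it suffices to bound $d_{T+1}$.

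The heart of the argument is the exact expansion
\[
d_{t+1} \;=\; \left(\tfrac{t-1}{t}\right)^{\!2} d_t \;+\; \tfrac{1}{t^2}\|\phi_{i_t} - \mu\|_\H^2 \;+\; \tfrac{2(t-1)}{t^2}\langle x_t - \mu,\, \phi_{i_t} - \mu\rangle_\H,
\]
whose two new terms I handle separately. For the cross term, because $\mu = \tfrac{1}{n}\sum_i \phi_i$ is itself a convex combination, the greedy choice of $i_t$ as the minimizer forces
\[
\langle x_t - \mu,\, \phi_{i_t} - \mu\rangle_\H \;\leq\; \tfrac{1}{n}\sum_i \langle x_t - \mu,\, \phi_i - \mu\rangle_\H \;=\; \langle x_t - \mu,\, \mu - \mu\rangle_\H \;=\; 0,
\]
so this term drops out. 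For the quadratic term, $K(p,p) = 1$ together with $K(p,p') \geq 0$ (which holds for the characteristic kernels of interest, e.g., Gaussian and Laplace) gives $\|\phi_i - \phi_j\|_\H^2 = 2 - 2K(p_i, p_j) \leq 2$, and Jensen's inequality applied to $\phi_i - \mu = \tfrac{1}{n}\sum_j (\phi_i - \phi_j)$ yields $\|\phi_i - \mu\|_\H^2 \leq 2$. The recurrence collapses to $d_{t+1} \leq \tfrac{(t-1)^2}{t^2} d_t + \tfrac{2}{t^2}$.

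I would finish by induction on $t \geq 1$, with invariant $d_{t+1} \leq 2/t$. The base case is $d_2 = \|\phi_{i_1} - \mu\|_\H^2 \leq 2$. For the step, assuming $d_t \leq 2/(t-1)$, the recurrence gives
\[
d_{t+1} \;\leq\; \tfrac{(t-1)^2}{t^2}\cdot \tfrac{2}{t-1} + \tfrac{2}{t^2} \;=\; \tfrac{2(t-1)+2}{t^2} \;=\; \tfrac{2}{t},
\]
closing the induction. Taking $T \geq 2/\eps^2$ then yields $d_{T+1} \leq \eps^2$, which is the theorem.

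The main obstacle is hitting the constant $2/\eps^2$ exactly. The natural hypothesis $d_t \leq c/t$ does not close the induction: one computes $(t+1)(t^2 - t + 1) = t^3 + 1 > t^3$, so one must use the shifted invariant $d_{t+1} \leq 2/t$ (equivalently $d_t \leq 2/(t-1)$) instead. Likewise, the tight diameter estimate $\|\phi_i - \mu\|_\H^2 \leq 2$ (rather than the loose $4$ that one would get from $\|\phi_i\|_\H \leq 1$ and the triangle inequality alone) relies on kernel nonnegativity, so the argument naturally tracks the characteristic-kernel setting of the theorem.
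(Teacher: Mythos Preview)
Your proposal is correct and follows essentially the same route as the paper: expand the squared error, drop the cross term using the greedy choice (minimum over a mean-zero family is nonpositive), bound the quadratic term, and induct to get $\|x_{T+1}-\mu\|_\H^2 \le 2/T$. Your treatment of the diameter bound $\|\phi_i-\mu\|_\H^2 \le 2$ via Jensen and kernel nonnegativity is in fact more careful than the paper's, which only justifies $\|\phi_{i_t}-\mu\|_\H \le 2$ through the triangle inequality (hence $4$ when squared) yet uses the tighter constant $2$ in the recurrence.
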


\begin{proof}	
	We first obtain the following recursive equation.
	\[
		x_{t+1}-\mu = \frac{1}{t}\phi_{i_t}+\frac{t-1}{t}x_{t-1}-\mu = \frac{1}{t}(\phi_{i_t}-\mu)+\frac{t-1}{t}(x_{t-1}-\mu)
	\]
	Then, by multiplying $t$ to both sides and taking the squared norm, we have
	\begin{align*}
		\normr{t (x_t-\mu)}^2 
		& =
		\normr{(\phi_{i_t}-\mu)+(t-1)(x_{t-1}-\mu)}^2 \\
		& =
		\normr{\phi_{i_t}-\mu}^2+2(t-1)\langle\phi_{i_t}-\mu, x_{t-1}-\mu\rangle_\H + (t-1)^2\normr{x_{t-1}-\mu}^2.  
	\end{align*}
	
Next we use two observations to simplify this.  	
First, since $\mu$ is the mean of $\{x_1, \ldots, x_n\}$ then $0=\sum_{i=1}^n\langle x_t-\mu, \phi_i-\mu\rangle_\H$.  Thus by optimal selection of $i_t$, then $\langle x_t-\mu, \phi_i-\mu\rangle_\H \leq 0$ for all $i\in\{1,\ldots,n\}$.

Second $\norm{\mu}_\H \leq 1$ since it is a convex combination of $\{\phi_1, \ldots, \phi_n\}$, where each $\norm{\phi_i}_\H = 1$.  This implies $\norm{\phi_{i_t}-x_t}_\H\leq \norm{\phi_{i_t}}_\H+\norm{x_t}_\H\leq 2$.  

Applying the above observations,
\[
	t^2\normr{x_t-\mu}^2 \leq (t-1)^2\normr{x_{t-1}-\mu}^2+2.  
\]
Now, by induction on $t$, at step $t=T$, we can conclude that $T^2 \normr{x_T-\mu}^2 \leq 2T$, and hence $\normr{x_T - \mu}^2 \leq 2/T$.  Thus for $T \geq 2/\eps^2$, then $\normr{x_T-\mu} \leq \eps$.  
\end{proof}

\section{Gaussian Kernel Coresets Bounded using Rectangle Discrepancy}
\label{sec:Gauss-rect}

We now present a completely different way to improve the size of kernel coresets, via discrepancy.  In particular we reduce the kernel coreset problem to a rectangle discrepancy problem.  Our result is specific to the Gaussian kernel.  

Let $(P,\Eu{R}_d)$ be the range space with ground set $P \subset \R^d$ and $\Eu{R}_d$ the family of subsets of $P$ defined by inclusion in axis-aligned rectangles.  In particular, for any point $x \in \R^d$, let $x_i$ represent its $i$th coordinate.  
We define a combinatorial rectangle $R \subset P$ as $R = \{x \in P \mid m_i \leq x_i \leq M_i\}$ by $2d$ values on the left $m_1, m_2, \ldots, m_d$ and right $M_1, M_2, \ldots, M_d$.  
Let $\Rc : \R^d \to \{0,1\}$ be the \emph{characteristic function of the rectangle} $R$; for a point $x \in \R^d$ it returns $1$ if $x \in R$ and $0$ otherwise.  

Let $\chi : P \to \{-1, +1\}$ be a coloring on $P$.  
Then the discrepancy of $(P,\Eu{R}_d)$ with respect to a coloring $\chi$ is defined
$
\disc(P,\chi,\Eu{R}_d) = \max_{R \in \Eu{R}_d} | \sum_{p \in R} \chi(p) |.
$

Following Joshi \etal~\cite{JKPV11}, we can define a similar concept for kernels.  Let $\Eu{K}_d$ be the family of functions $\Eu{K}_d = \{K(x,\cdot) \mid x \in \R^d\}$, for a specific kernel $K$ which in this case will be Gaussian.  Now define the kernel discrepancy of $(P, \Eu{K}_d)$ with respect to a coloring $\chi$ as 
$
\disc(P, \chi, \Eu{K}_d) = \max_{x \in \R^d} | \sum_{p \in P} \chi(p) K(x,p)|.
$

\begin{lemma}
\label{lem:KtoR}
For any point set $P \subset \R^d$ and coloring $\chi$, 
we have
$\disc(P,\chi, \Eu{K}_d)\leq \disc(P, \chi, \Eu{R}_d)$.
\end{lemma}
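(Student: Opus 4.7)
The plan is to represent the Gaussian kernel $K(x,\cdot)$ as a weighted integral of indicator functions of axis-aligned rectangles, so that the rectangle discrepancy bound can be applied pointwise inside the integral and then averaged. Two features of the Gaussian make this possible: it factorizes across coordinates, and each one-dimensional factor has symmetric interval super-level sets.

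First I would factorize $K(x,p) = \prod_{i=1}^d \exp(-(x_i - p_i)^2/\sigma^2)$ and apply the layer-cake formula to each factor, writing $\exp(-(x_i - p_i)^2/\sigma^2) = \int_0^1 \one[|x_i - p_i| \leq r(t_i)] \, dt_i$ with $r(t) = \sigma \sqrt{\ln(1/t)}$. Multiplying across coordinates and using Fubini yields the representation $K(x,p) = \int_{[0,1]^d} \one[x \in B_t(p)] \, dt$, where $B_t(p) = \prod_{i=1}^d [p_i - r(t_i),\, p_i + r(t_i)]$ is an axis-aligned rectangle centered at $p$, and $t = (t_1, \ldots, t_d)$ ranges over the unit cube.

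Next, because $B_t(p)$ is symmetric about $p$, I can exchange roles via $x \in B_t(p) \Leftrightarrow p \in B_t(x)$, so the rectangle now depends only on the query $x$ and not on the summation index. Swapping sum and integral gives $\sum_{p \in P} \chi(p) K(x,p) = \int_{[0,1]^d} \big( \sum_{p \in P \cap B_t(x)} \chi(p) \big) \, dt$. For each fixed $t \in [0,1]^d$, the set $P \cap B_t(x)$ is a range in $\Eu{R}_d$, so the inner signed sum is bounded in absolute value by $\disc(P,\chi,\Eu{R}_d)$. Moving absolute values inside by the triangle inequality and using that $[0,1]^d$ has unit Lebesgue measure gives $|\sum_{p \in P} \chi(p) K(x,p)| \leq \disc(P,\chi,\Eu{R}_d)$; taking the supremum over $x \in \R^d$ proves the lemma.

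The main obstacle is really the representation step, which is where the Gaussian form is essential. The tensor product structure $K(x,p) = \prod_i f(x_i - p_i)$ is what converts a product of 1D interval indicators into a $d$-dimensional axis-aligned rectangle indicator; a radial kernel that does not factorize (for example, a Laplace kernel using the $\ell_2$ norm) would instead produce $\ell_2$-balls under the layer-cake formula, and the same argument would reduce to ball discrepancy rather than rectangle discrepancy. Once the representation is in place, the swap of sum and integral and the pointwise discrepancy bound are essentially bookkeeping.
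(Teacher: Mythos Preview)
Your proposal is correct and is essentially the same argument as the paper's: both factorize the Gaussian across coordinates, write each one-dimensional factor as a weighted average of interval indicators, combine these into axis-aligned rectangles via Fubini, and then bound the signed sum inside the integral by $\disc(P,\chi,\Eu{R}_d)$ using that the weights integrate to $1$. The only cosmetic difference is that the paper parametrizes the integral by the half-width $r$ with density $2r\exp(-r^2)$, whereas your layer-cake formulation parametrizes by the level $t\in[0,1]$; these are related by the change of variables $t=\exp(-r^2)$.
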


The proof of the above lemma hinges on two key observations.
The first one is the Gaussian kernel has an important property in common with the characteristic function for an axis-aligned rectangle: multiplicative separability.  
Namely, both of them can be expressed as the product of factors, each corresponding to one dimension.  
Another observation is that the Gaussian function could be expressed as the average of characteristic function for a family of intervals.
Combining the above two facts, one can derive that the signed discrepancy for the Gaussian kernel is the average of signed discrepancy for a family of axis paralleled rectangles.
Finally, by triangle inequality, we will complete the proof of lemma.

\begin{proof}
Define that $\one_{[-r,r]}(x) = 1$ iff $r >\abs{x}$, otherwise $\one_{[-r,r]}(x) = 0$.   
For any $x\in \mathbb{R}$, consider the term $\int_0^\infty 2r\exp(-r^2)\one_{[-r,r]}(x) \dir{r}$, which can be expanded as follows.  
\[
\int_0^\infty 2r\exp(-r^2)\one_{[-r,r]}(x) \dir r 
= 
\int_{\abs{x}}^\infty 2r\exp(-r^2) \dir r 
= 
\left. -\exp(-r^2)\right\vert_{\abs{x}}^\infty = \exp(-x^2)
\numberthis\label{intform}
\]

Next we show that we can decompose the Gaussian kernel $K(c,\cdot) \in \Eu{K}_d$, as follows for any $x \in \R^d$.  
\begin{align*}
		K(c,x)
		& =
		\exp(-\norm{x-c}^2) 
		= 
		\exp(- (\sum_{i=1}^d (x_i - c_i)^2)) 
		=
		\prod_{i=1}^d  \exp(-(x_i-c_i)^2) \\
		& =
		\prod_{i=1}^d \left( \int_0^\infty 2r_i\exp(-r_i^2) \one_{[-r_i, r_i]}(x_i-c_i) \dir r_i \right) \qquad\text{from ($\ref{intform}$)}\\
		& =
		\prod_{i=1}^d \left( \int_0^\infty 2r_i\exp(-r_i^2) \one_{[c_i-r_i, c_i+r_i]}(x_i) \dir r_i \right) \\
		& = 
		\int_0^\infty\dots\int_0^\infty \left(\prod_{i=1}^d  2r_i\exp(-r_i^2)  \one_{[c_i-r_i, c_i+r_i]}(x_i) \right)  \dir r_1\dots \dir r_d \\
		& = 
		\int_0^\infty\dots\int_0^\infty \prod_{i=1}^d\left( 2r_i\exp(-r_i^2)\right) \prod_{i=1}^d \left(\one_{[c_i-r_i, c_i+r_i]}(x_i) \right) \dir r_1\dots \dir r_d \\
		& = 
		\int_0^\infty\dots\int_0^\infty \prod_{i=1}^d\left( 2r_i\exp(-r_i^2)\right) \one_{\prod_{i=1}^d[c_i-r_i,c_i+r_i]}(x) \dir r_1\dots \dir r_d 
\end{align*}
The third line is from 
\[
\mathbbm{1}_{[-r_i, r_i]}(x_i-c_i)=1 
\;\; \Leftrightarrow \;\; 
-r_i \leq x_i-c_i \leq r_i 
\;\; \Leftrightarrow \;\; 
c_i-r_i \leq x_i \leq c_i+r_i 
\;\; \Leftrightarrow \;\; 
\mathbbm{1}_{[c_i-r_i, c_i+r_i]}(x_i)=1.
\]
The fourth line is because for any $f,g$, $\left(\int_0^\infty f(x) \dir x\right)\left(\int_0^\infty g(y) \dir y \right) = \int_0^\infty \int_0^\infty f(x)g(y) \dir x \dir y$.
And the last line applies the definition $\prod_{i=1}^d \left(\mathbbm{1}_{[c_i-r_i, c_i+r_i]}(x_i)\right) = \mathbbm{1}_{\prod_{i=1}^d[c_i-r_i,c_i+r_i]}(x)$.

Then we can apply this rewriting to the definition of kernel discrepancy for any $\chi$, and then factor out all of the positive terms.   What remains in the absolute value is exactly $\disc(P,\chi, R)$ for a rectangle $R$ with characteristic function $\mathbbm{1}_R(x) = \mathbbm{1}_{\prod_{i=1}^d[c_i-r_i,c_i+r_i]}(x)$, and hence is bounded by $\disc(P,\chi, \Eu{R}_d)$.  
	\begin{align*}
		\left| \sum_{p\in P} \chi(p) K(c,p) \right| 
		& =
		\left| \sum_{p\in P} \chi(p) \int_0^\infty\dots\int_0^\infty \prod_{i=1}^d\left( 2r_i\exp(-r_i^2)\right) \mathbbm{1}_{\prod_{i=1}^d[c_i-r_i,c_i+r_i]}(p) \dir r_1\dots \dir r_d \right| \\
		& =
		\left| \int_0^\infty\dots\int_0^\infty \prod_{i=1}^d\left( 2r_i\exp(-r_i^2)\right) \left(\sum_{p\in P} \chi(p) \mathbbm{1}_{\prod_{i=1}^d[c_i-r_i,c_i+r_i]}(p) \right) \dir r_1\dots \dir r_d \right| \\
		& \leq 
		\int_0^\infty\dots\int_0^\infty \prod_{i=1}^d\left( 2r_i\exp(-r_i^2)\right) \left| \sum_{p\in P} \chi(p) \mathbbm{1}_{\prod_{i=1}^d[c_i-r_i,c_i+r_i]}(p)\right| \dir r_1\dots \dir r_d  \\
		& \leq 
		\int_0^\infty\dots\int_0^\infty \prod_{i=1}^d\left( 2r_i\exp(-r_i^2)\right) \disc (P,\chi,\mathcal{R}_d) \dir r_1\dots \dir r_d  \\
		& =
		\disc(P,\chi,\mathcal{R}_d)\prod_{i=1}^d\left(\int_0^\infty 2r_i\exp(-r_i^2) \dir r_i\right) \\
		& \leq
		\disc(P,\chi,\mathcal{R}_d)
	\end{align*}
The last line follows by $\int_0^\infty 2r\exp(-r^2) \dir r = \exp(0) = 1$.   	
\end{proof}

Furthermore, we define 
\[
	\disc(n,\mathcal{R}_d) = \max_{\abs{P}=n}\min_{\chi} \disc(P,\chi,\mathcal{R}_d)
\qquad
\textrm{ and }
\qquad
	\disc(n,\mathcal{K}_d) = \max_{\abs{P}=n}\min_{\chi} \disc(P,\chi,\mathcal{K}_d).  
\]
Bansal and Garg~\cite{BG17} showed $\disc(n, \mathcal{R}_d) = O(\log^dn)$, and their proof provides a polynomial time algorithm.  
Nikolov~\cite{Nik17} soon after showed that $\disc(n,\mathcal{R}_d) = O(\log^{d-\frac{1}{2}}n)$ although this result does not describe how to efficiently construct the coloring.  
With these result we obtain the following.  

\begin{corollary}
$\disc (n,\Eu{K}_d)=O(\log^{d-\frac{1}{2}}n)$, and for any point set $P$ of size $n$, one can construct a coloring $\chi$ so that $\disc(P, \Eu{K}_d, \chi) = O(\log^d n)$.  
\end{corollary}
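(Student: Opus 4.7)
The plan is a direct chaining of Lemma \ref{lem:KtoR} with the two cited rectangle-discrepancy results, since that lemma gives the pointwise bound $\disc(P,\chi,\Eu{K}_d) \le \disc(P,\chi,\Eu{R}_d)$ for \emph{the same} coloring $\chi$. In particular, any coloring certified to have small rectangle discrepancy automatically has small Gaussian-kernel discrepancy, so no independent analysis of $\Eu{K}_d$ is needed.

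First I would handle the non-constructive bound. Fix an arbitrary point set $P \subset \R^d$ with $|P|=n$. By Nikolov's theorem, there exists a coloring $\chi^\ast: P \to \{-1,+1\}$ such that $\disc(P,\chi^\ast,\Eu{R}_d) = O(\log^{d-\frac{1}{2}} n)$. Applying Lemma \ref{lem:KtoR} to this specific coloring yields $\disc(P,\chi^\ast,\Eu{K}_d) \le \disc(P,\chi^\ast,\Eu{R}_d) = O(\log^{d-\frac{1}{2}} n)$. Since $\min_\chi \disc(P,\chi,\Eu{K}_d) \le \disc(P,\chi^\ast,\Eu{K}_d)$ and $P$ was an arbitrary $n$-point set, taking the maximum over $P$ gives $\disc(n,\Eu{K}_d) = O(\log^{d-\frac{1}{2}} n)$.

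For the constructive second half, the same template applies but with the Bansal--Garg algorithm in place of Nikolov's existence result. Given $P$ of size $n$, I would run the Bansal--Garg polynomial-time procedure to obtain an explicit coloring $\chi$ with $\disc(P,\chi,\Eu{R}_d) = O(\log^d n)$, and then invoke Lemma \ref{lem:KtoR} once more to conclude $\disc(P,\chi,\Eu{K}_d) = O(\log^d n)$. Nothing about the construction needs to be rederived because the reduction is coloring-by-coloring.

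Honestly, there is no real obstacle to this corollary: Lemma \ref{lem:KtoR} has already done the geometric work of relating the continuous Gaussian test functions to the discrete rectangle range space, and the two cited discrepancy results are used as black boxes. The only thing worth double-checking is that the rectangle range space in Bansal--Garg and Nikolov matches the one defined here (axis-aligned combinatorial rectangles over an arbitrary $n$-point set in $\R^d$), which it does.
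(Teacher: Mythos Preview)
Your proposal is correct and matches the paper's approach exactly: the corollary is stated immediately after citing the Bansal--Garg and Nikolov bounds on $\disc(n,\Eu{R}_d)$, and is meant to follow by directly combining each of those with Lemma~\ref{lem:KtoR} applied to the same coloring. There is no additional argument in the paper beyond this black-box chaining.
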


The following corollary is the direct implication of the above, for instance following~\cite{Phi13}.

\begin{corollary}
For any point set $P \subset \R^d$, there exists an $\eps$-kernel coreset of size 
$O(\frac{1}{\eps} \log^{d-\frac{1}{2}} \frac{1}{\eps})$.  
Moreover, an $\eps$-kernel coreset of size 
$O(\frac{1}{\eps} \log^{d} \frac{1}{\eps})$
can be constructed in $O(n + \poly(1/\eps))$ time
with high probability.  
\end{corollary}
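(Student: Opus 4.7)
The plan is to apply the standard halving-based coreset construction, as used by Phillips~\cite{Phi13}, using the kernel discrepancy bounds of the preceding corollary as a primitive. The basic halving step takes a point set $P'$ of size $m$ and a coloring $\chi: P' \to \{-1,+1\}$ with $\disc(P',\chi,\Eu{K}_d) \leq D(m)$, and keeps only the points colored $+1$ to form $P'' \subset P'$ of size $m/2$ (modulo a standard one-point balancing when $m$ is odd or the coloring is not perfectly balanced). Direct expansion gives
\[
\kde_{P'}(x) - \kde_{P''}(x) = -\frac{1}{m}\sum_{p \in P'} \chi(p) K(x,p),
\]
so $\|\kde_{P'}-\kde_{P''}\|_\infty \leq D(m)/m$ by definition of kernel discrepancy.

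Iterating $\ell$ halvings starting from $P$ of size $n$ yields a subset of size $n/2^\ell$ with accumulated $L_\infty$ error at most $\sum_{i=0}^{\ell-1} D(n/2^i)/(n/2^i)$. Because the weights $2^i/n$ grow geometrically while $D(m)=\log^c m$ is subpolynomial, this sum is dominated up to a constant by its final term $D(k)/k$ with $k = n/2^\ell$. Solving $D(k)/k \leq \eps$ using the Nikolov bound $D(m)=O(\log^{d-1/2}m)$ yields $k = O((1/\eps)\log^{d-1/2}(1/\eps))$, proving the existential claim. Using the constructive Bansal--Garg bound $D(m)=O(\log^d m)$ instead gives $k = O((1/\eps)\log^d(1/\eps))$, with each coloring computable in polynomial time.

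To achieve $O(n + \poly(1/\eps))$ runtime, running Bansal--Garg directly on all of $P$ would cost $\poly(n)$, which is too slow. I would use a two-stage construction: first draw a uniform random sample of size $N = O((1/\eps^2)(d + \log(1/\delta)))$, which by Joshi et al.~\cite{JoshiKommarajuPhillips2011} is an $(\eps/2)$-kernel coreset $P_0$ of $P$ with probability $\geq 1-\delta$, built in $O(n)$ time. Then apply the halving procedure to $P_0$, targeting error $\eps/2$ and using the Bansal--Garg algorithm at each step; this runs in $\poly(N) = \poly(1/\eps)$ time for constant $d$ and $\delta$. Triangle inequality combines the two $(\eps/2)$ errors into an $\eps$-kernel coreset of $P$, and standard boosting handles the probability amplification without affecting the $O(n)+\poly(1/\eps)$ budget.

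The main point requiring care is the dominance of the final term in the halving error series, which is a short calculation once one observes the geometric decay of $2^{-i}$ against the near-constant growth of $\log^c(n/2^i)$, together with the balancing trick when the coloring is not perfectly even. Beyond these routine details, all the real work has already been done by Lemma~\ref{lem:KtoR} and the cited rectangle-discrepancy colorings.
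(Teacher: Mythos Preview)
Your proposal is correct and matches the paper's intended argument: the paper gives no detailed proof at all, merely stating that the corollary ``is the direct implication of the above, for instance following~\cite{Phi13},'' i.e., the standard halving framework fed the rectangle-discrepancy colorings of Nikolov and Bansal--Garg via Lemma~\ref{lem:KtoR}, with an initial random-sampling reduction to obtain the $O(n+\poly(1/\eps))$ runtime. Your write-up faithfully expands exactly this outline, including the geometric-series accounting and the two-stage sample-then-halve construction.
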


\section{Lower Bound for Kernel Coresets}
\label{sec:LB}
In this section, we provide a lower bound matching or nearly matching our algorithms in Section \ref{sec:FW}.  To do so, we need to specify some properties of the kernels we consider.  We only consider shift invariant kernels, with a univariate function $f(|\|x-y\|) = K(x,y)$.  

Next we consider a class of functions we call \emph{somewhere-steep}.  For these kernels, there exists a region of $f$ where it is ``steep;'' its value consistently decreases quickly.  Specifically
\begin{itemize}
\item There exist constant $C_f>0$, and values $z_f>r_f>0$ such that $f(z_1)-f(z_2)>C_f\cdot(z_2-z_1)$ for all $z_1\in (z_f-r_f,z_f)$ and $z_2\in (z_f,z_f+r_f)$.  
\end{itemize}
Almost all kernels we have observed in literature (Gaussian, Laplace, Triangle, Epanechnikov, Sinc etc) are steep for all values $z_1, z_2 > 0$, and thus satisfy this property.  The exception is the ball kernel.  For this we define another class of kernels we call \emph{drop kernels}  where $f$ has a discontinuity where it drops more than a constant.  Specifically
\begin{itemize}
\item There exist constant $C_f>0$, and values $z_f>r_f>0$ such that $f(z_1)-f(z_2)>C_f$ for all $z_1\in (z_f-r_f,z_f)$ and $z_2\in (z_f,z_f+r_f)$.  
\end{itemize}

\paragraph{Construction.}
We now describe the construction used in the lower bounds; it is illustrated in Figure \ref{fig:LB}.  
Let $z_f$ be a scalar value that will depend on the specific kernel's univariate function $f$.  Now consider a point set $P=\{p_i = z_f e_i/\sqrt{2}\mid i=1,\dots,n\} \subset \mathbb{R}^n$ i.e. the scaled canonical basis, where $e_i = (0,0,\ldots,0,1,0,\ldots,0)$ with the $1$ in the $i$th coordinate. 
We can select $Q=\{p_i\mid i=1,\dots,k\}$, without loss of generality, because of the symmetry in $P$.  
Denote $\kde_Q=\sum_{i=1}^k\beta_i\phi_i$ for some $\beta_1, \beta_2, \ldots$ where $\sum_{i=1}^k\beta_i=1$. 
Let 
$\bar{p}=\frac{1}{n}\sum_{i=1}^n p_i$ 
be the mean of points in $P$, 
$\bar{p}_k=\frac{1}{k}\sum_{i=1}^k p_i$ 
be the mean of points in $Q$.  
Let 
$\bar{p}_{-k}=\frac{1}{n-k}\sum_{i=k+1}^n p_i$
be mean of points in $P\setminus Q$, and 
$p=\bar{p}_k+\frac{z_f}{\sqrt{2}}\frac{\bar{p}_k-\bar{p}}{\norm{\bar{p}_k-\bar{p}}}$ 
be the point lying on the line $\bar{p}\bar{p}_k$ such that $p$ and $\bar{p}$ are on the opposite side of $\bar{p}_k$ and $\norm{p-\bar{p}_k}=\frac{z_f}{\sqrt{2}}$.  
		Note that, for all $i=1,\dots,k$, $\norm{p-p_i}$ are the same which is denoted $l_1$ and, 
		for all $i=k+1,\dots,n$, $\norm{p-p_i}$ are the same which is denoted $l_2$. 

\begin{figure}
\begin{center}
	\includegraphics[width=0.8\textwidth]{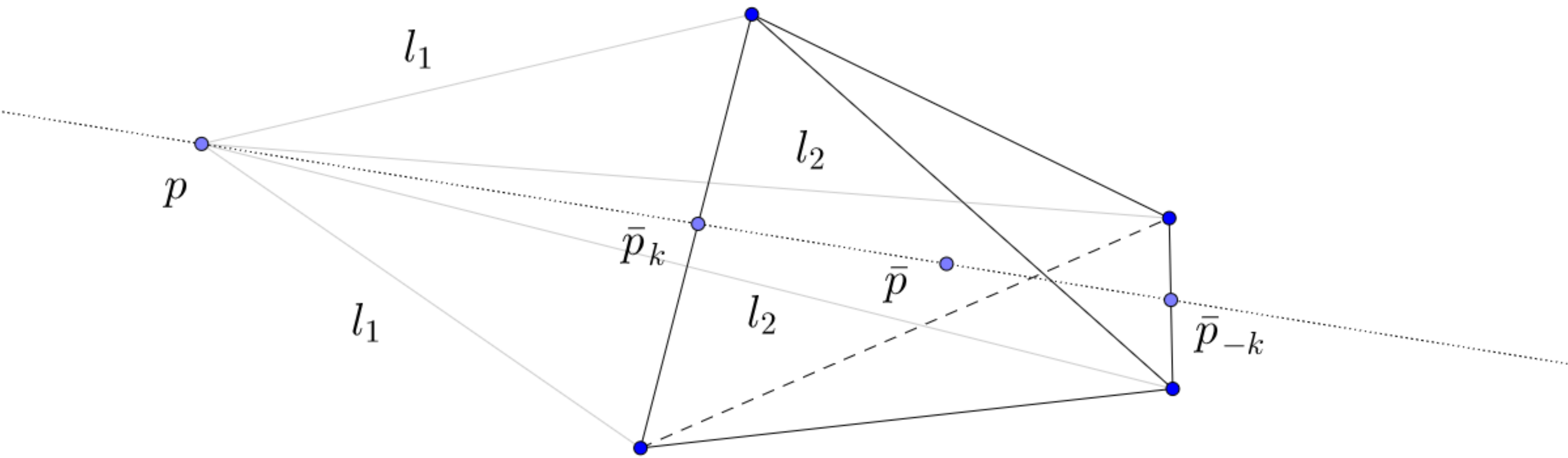}
\end{center}
\vspace{-.25in}
	\caption{Illustration of the lower bound construction.\label{fig:LB}}
\end{figure}

Now we evaluate $\kde_P-\kde_Q$ at $p$, resulting in
\begin{align*}
(\kde_Q-\kde_P)(p)
& =
\sum_{i=1}^k(\beta_i-\frac{1}{n})f(\norm{p-p_i})+\sum_{i=k+1}^n(-\frac{1}{n})f(\norm{p-p_i}) 
\\& =
\sum_{i=1}^k(\beta_i-\frac{1}{n})f(l_1)+\sum_{i=k+1}^n(-\frac{1}{n})f(l_2)
\\& =
(\sum_{i=1}^k\beta_i-\frac{k}{n})f(l_1)+(n-k)(-\frac{1}{n})f(l_2) 
\\& =
(1-\frac{k}{n})(f(l_1)-f(l_2)).  \numberthis \label{kdeerr}  
\end{align*}

\begin{lemma} \label{lem:l1l2}
$l_1^2 =z_f^2 - z_f^2 / (2k)$ and 
\[
l_2^2 = \frac{z_f^2}{2} \left(1 + \sqrt{\frac{1}{k} - \frac{1}{n}} + \sqrt{\frac{1}{n-k} - \frac{1}{n}}\right)^2 + \frac{z_f^2}{2} \left(1-\frac{1}{n-k}\right).  
\]
\end{lemma}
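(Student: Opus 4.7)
}
Because the points $p_i=(z_f/\sqrt{2})e_i$ are pairwise orthogonal with $\|p_i\|^2=z_f^2/2$, everything reduces to a few inner products. Write $v=(\bar p_k-\bar p)/\|\bar p_k-\bar p\|$, so $p=\bar p_k+(z_f/\sqrt2)v$, and expand
\[
\|p-p_i\|^2=\|\bar p_k-p_i\|^2+\tfrac{z_f^2}{2}+\sqrt{2}\,z_f\,\langle \bar p_k-p_i,\,v\rangle.
\]
The proof just evaluates the three quantities on the right, treating the two cases $i\le k$ and $i>k$ separately.

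First I would compute the norms. Using $\langle p_i,p_j\rangle=(z_f^2/2)\delta_{ij}$, a short coordinate calculation gives $\|\bar p_k-p_i\|^2=\tfrac{z_f^2}{2}(1-\tfrac1k)$ for $i\le k$ and $\|\bar p_k-p_i\|^2=\tfrac{z_f^2}{2}(1+\tfrac1k)$ for $i>k$. Similarly $\|\bar p_k-\bar p\|^2=\tfrac{z_f^2}{2}(\tfrac1k-\tfrac1n)$; I would get this by writing $\bar p_k-\bar p$ coordinatewise (value $\tfrac{z_f}{\sqrt2}(\tfrac1k-\tfrac1n)$ on the first $k$ coordinates, $-\tfrac{z_f}{n\sqrt2}$ on the last $n-k$) and summing squares.

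Next I would compute the cross term $\langle \bar p_k-p_i,\bar p_k-\bar p\rangle$ in each case. For $i\le k$ the contribution of coordinate $i$ cancels the contribution of the other $k-1$ coordinates in $\{1,\dots,k\}$ because the factors $(k-1)\cdot\tfrac1k+(1-k)\cdot\tfrac1k=0$, and all coordinates $>k$ vanish since $(\bar p_k-p_i)_j=0$ there. So $\langle \bar p_k-p_i,v\rangle=0$ and
\[
l_1^2=\tfrac{z_f^2}{2}(1-\tfrac1k)+\tfrac{z_f^2}{2}=z_f^2-\tfrac{z_f^2}{2k},
\]
which is the first identity. For $i>k$ only two nonzero contributions survive and give $\langle \bar p_k-p_i,\bar p_k-\bar p\rangle=\tfrac{z_f^2}{2k}$; dividing by $\|\bar p_k-\bar p\|$ yields $\langle \bar p_k-p_i,v\rangle=\tfrac{z_f}{\sqrt2}\sqrt{\tfrac{n}{k(n-k)}}$.

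Putting the pieces together,
\[
l_2^2=\tfrac{z_f^2}{2}\!\left(1+\tfrac1k\right)+\tfrac{z_f^2}{2}+z_f^2\sqrt{\tfrac{n}{k(n-k)}}
     =\tfrac{z_f^2}{2}\!\left(2+\tfrac1k+2\sqrt{\tfrac{n}{k(n-k)}}\right).
\]
The only nontrivial manipulation is recognizing this as the stated form. Set $a=\sqrt{\tfrac1k-\tfrac1n}$ and $b=\sqrt{\tfrac1{n-k}-\tfrac1n}$; then $ab=\sqrt{\tfrac{(n-k)}{kn}\cdot\tfrac{k}{(n-k)n}}=\tfrac1n$, so $(a+b)^2=a^2+b^2+2ab=\tfrac{n}{k(n-k)}$, and $\tfrac{n}{k(n-k)}-\tfrac1{n-k}=\tfrac1k$. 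Hence $\tfrac{z_f^2}{2}(1+a+b)^2+\tfrac{z_f^2}{2}(1-\tfrac1{n-k})=\tfrac{z_f^2}{2}(2+\tfrac1k+2(a+b))$, which matches the expression above. The only real obstacle is this last algebraic repackaging, specifically spotting that the cross-product $ab$ equals $1/n$ exactly, which is what makes the square root collapse into the clean form stated.
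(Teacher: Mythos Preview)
Your computation is correct, but you arrive at the stated form of $l_2^2$ by a different route than the paper.

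The paper's proof is more geometric and explains \emph{why} $l_2^2$ has the form it does. Rather than expanding everything about $\bar p_k$, the paper observes that $\bar p_{-k}$ also lies on the line $\bar p\bar p_k$ (since $\bar p$ is a convex combination of $\bar p_k$ and $\bar p_{-k}$), and that this line is orthogonal to $p_i-\bar p_{-k}$ for each $i>k$. Hence the Pythagorean theorem gives directly
\[
l_2^2=\|p-\bar p_{-k}\|^2+\|\bar p_{-k}-p_i\|^2
     =\tfrac{z_f^2}{2}\Bigl(1+\sqrt{\tfrac1k-\tfrac1n}+\sqrt{\tfrac1{n-k}-\tfrac1n}\Bigr)^{\!2}
       +\tfrac{z_f^2}{2}\Bigl(1-\tfrac1{n-k}\Bigr),
\]
because $\|p-\bar p_{-k}\|=\|p-\bar p_k\|+\|\bar p_k-\bar p\|+\|\bar p-\bar p_{-k}\|$ (four collinear points in this order) and $\|\bar p_{-k}-p_i\|^2=\tfrac{z_f^2}{2}(1-\tfrac1{n-k})$. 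So the square roots in the statement are simply those three segment lengths, and no algebraic repackaging is needed.

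Your approach keeps $\bar p_k$ as the sole expansion point for both cases, evaluates the cross term $\langle \bar p_k-p_i,v\rangle$ in coordinates, and obtains the equivalent expression $\tfrac{z_f^2}{2}\bigl(2+\tfrac1k+2\sqrt{\tfrac{n}{k(n-k)}}\bigr)$; you then need the identity $ab=1/n$ (with $a=\sqrt{\tfrac1k-\tfrac1n}$, $b=\sqrt{\tfrac1{n-k}-\tfrac1n}$) to match the stated form. Both arguments are short; the paper's buys you the stated form for free and makes the later inequality $l_2^2\geq\|p-\bar p\|^2+\|\bar p_{-k}-p_i\|^2$ transparent, while yours avoids introducing $\bar p_{-k}$ at all and is slightly more mechanical.
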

\begin{proof}
Since $\bar{p}=(\frac{k}{n})\bar{p}_k+(\frac{n-k}{n})\bar{p}_{-k}$, it means that $\bar{p}_{-k}$ also lies on line $\bar{p}\bar{p}_k$. 
Moreover, $\langle\bar{p}_k-\bar{p},\bar{p}_{-k}-\bar{p}\rangle=-\frac{k}{n-k}\norm{\bar{p}-\bar{p}_k}^2\leq 0$ which shows that $\bar{p}_k$ and $\bar{p}_{-k}$ lies on different side of $\bar{p}$. For any $i=1,\dots,k$, $\langle \bar{p}-\bar{p}_k, \bar{p}_k-p_i\rangle = 0$ which means that the line $\bar{p}\bar{p}_k$ is perpendicular to the subspace of the linear combination of $\{p_1,\dots,p_k\}$.  A similar argument can be applied for $\bar{p}_{-k}$. 
Also, note that, for all $i=1, \dots,k$, $\norm{\bar{p}_k-p_i}$ are the same and are equal to $\frac{z_f}{\sqrt{2}}\sqrt{1-\frac{1}{k}}$, and for all $i=k+1,\dots,n$, $\norm{\bar{p}_{-k}-p_i}$ are same and equal to $\frac{z_f}{\sqrt{2}}\sqrt{1-\frac{1}{n-k}}$. 
Moreover, it is easy to compute $\norm{\bar{p}-\bar{p}_k}=\frac{z_f}{\sqrt{2}}\sqrt{\frac{1}{k}-\frac{1}{n}}$ and $\norm{\bar{p}-\bar{p}_{-k}}=\frac{z_f}{\sqrt{2}}\sqrt{\frac{1}{n-k}-\frac{1}{n}}$.

For all $i=1,\dots,k$,
\[
l_1^2 = \norm{p-\bar{p}_k}^2+\norm{\bar{p}_k-p_i}^2= \frac{z_f^2}{2}+\frac{z_f^2}{2}(1-\frac{1}{k})=z_f^2-\frac{z_f^2}{2k}.
\]

For all $i=k+1,\dots,n$,
\[
l_2^2 
=	
\norm{p-\bar{p}_{-k}}^2+\norm{\bar{p}_{-k}-p_i}^2
=
\frac{z_f^2}{2} \left( 1+\sqrt{\frac{1}{k}-\frac{1}{n}}+\sqrt{\frac{1}{n-k}-\frac{1}{n}}\right)^2+\frac{z_f^2}{2}\left(1-\frac{1}{n-k}\right).  \qedhere
\]
\end{proof}

\paragraph{Analysis.}
Since the choice of $Q$ is arbitrary due to the symmetry of $P$, if we can show (\ref{kdeerr}) is sufficiently large as a function of $k$, then we can prove a lower bound.  Given a careful choice of $z_f$ the depends on the kernel, we now evaluate (\ref{kdeerr}) with respect to $k$ using the definitions of $l_1$ and $l_2$ specified in Lemma \ref{lem:l1l2}.
But first we observe that a (very minor)%
\footnote{  
The bounds will contain some complicated appearing terms depending on $z_f$ and $r_f$.  We can set $z_f/2 = r_f$, then we observe they are small constants
$
(\frac{4 z_f^2}{2 z_f r_f  + r_f^2})^2 
= 
(\frac{4 z_f^2}{z_f^2  + 0.25z_f^2})^2 
=
(\frac{4}{1.25})^2
=10.24, 
$
and 
$
\frac{z_f^2}{2(2 z_f r_f - r_f^2)}
=
\frac{z_f^2}{2z_f^2 - 0.5z_f^2}
\approx
0.666.
$
}
restriction on $k$, we can ensure that $l_1$ and $l_2$ are in the proper interval with respect to $z_f$ and $r_f$.  

\begin{lemma} \label{lem:interval}
If $\min\{n-1, n-(\frac{4z_f^2}{2z_fr_f+r_f^2})^2\} \geq k\geq \max\{\frac{z_f^2}{2(2z_fr_f-r_f^2)}, (\frac{4z_f^2}{2z_fr_f+r_f^2})^2, 1\}$ then 
$l_1\in (z_f-r_f,z_f)$ and $l_2\in(z_f,z_f+r_f)$. 
\end{lemma}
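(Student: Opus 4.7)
The plan is to unpack the closed-form expressions for $l_1^2$ and $l_2^2$ from Lemma~\ref{lem:l1l2} and verify the four strict inequalities $z_f - r_f < l_1 < z_f < l_2 < z_f + r_f$ separately, matching each to one piece of the hypothesis on $k$ and $n-k$.

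First I would dispose of the interval for $l_1$; this is essentially algebra. The upper bound $l_1 < z_f$ is immediate from $l_1^2 = z_f^2 - z_f^2/(2k) < z_f^2$ whenever $k \geq 1$. The lower bound $l_1 > z_f - r_f$ is equivalent to $l_1^2 > (z_f - r_f)^2$; expanding and solving for $k$ yields exactly $k > z_f^2 / (2(2 z_f r_f - r_f^2))$, the first entry inside the max in the hypothesis. The positivity of the denominator follows from $z_f > r_f$.

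Next I would handle the lower bound $l_2 > z_f$. Writing $a = \sqrt{1/k - 1/n}$ and $b = \sqrt{1/(n-k) - 1/n}$, this is equivalent to $(1+a+b)^2 > 1 + 1/(n-k)$. Since $a, b \geq 0$, we have $(1+a+b)^2 \geq 1 + 2b + b^2 = 1 + 2b + 1/(n-k) - 1/n$, so it suffices to show $2b > 1/n$. For $1 \leq k \leq n-1$, the estimate $b \geq \sqrt{1/(n-1) - 1/n} = 1/\sqrt{n(n-1)}$ makes this routine, and this is where the hypothesis $k \leq n-1$ is used.

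The main work is the upper bound $l_2 < z_f + r_f$, and this is what the two square-root bounds $k,\, n-k \geq (4 z_f^2 / (2 z_f r_f + r_f^2))^2$ in the hypothesis are designed for. Using $a < 1/\sqrt{k}$ and $b < 1/\sqrt{n-k}$ together with $1 - 1/(n-k) < 1$, and the elementary inequality $(1+c)^2 \leq 1 + 4c$ valid for $0 \leq c \leq 2$, the target inequality reduces to showing $c := 1/\sqrt{k} + 1/\sqrt{n-k} \leq (2 z_f r_f + r_f^2) / (2 z_f^2)$. The hypothesis forces each of $1/\sqrt{k}$ and $1/\sqrt{n-k}$ to be at most $(2 z_f r_f + r_f^2) / (4 z_f^2)$, so the bound on $c$ is immediate, and the same estimate together with $z_f > r_f$ gives $c < 2$, so the quadratic inequality applies. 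The only real obstacle is bookkeeping: tracking that the strictness of $a < 1/\sqrt k$, $b < 1/\sqrt{n-k}$, and $1 - 1/(n-k) < 1$ propagates through to give $l_2 < z_f + r_f$ rather than just $\leq$, even in the boundary case where $k$ or $n-k$ meets the hypothesis with equality.
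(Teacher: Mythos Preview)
Your proposal is correct and follows essentially the same route as the paper's own proof: the four inequalities are verified separately, the bound $l_1 > z_f - r_f$ comes from the same algebraic rearrangement, the bound $l_2 > z_f$ is obtained by dropping the $a$-term and checking $2b > 1/n$ (the paper leaves this last step implicit), and the bound $l_2 < z_f + r_f$ uses the same estimate $a+b \lesssim 1/\sqrt{k} + 1/\sqrt{n-k}$ together with the quadratic trick $(1+c)^2 \leq 1 + 4c$ for $c \leq 2$ (the paper phrases this as $2m^2 \leq 2m$ with $m = \max\{1/\sqrt{k},1/\sqrt{n-k}\}$). The differences are purely notational.
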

\begin{proof}
Clearly from Lemma \ref{lem:l1l2}, $l_1^2$ is less than $z_f^2$. Also,
\[
\frac{z_f^2}{2k}\leq \frac{z_f^2}{2}/\left(\frac{z_f^2}{2(2z_fr_f-r_f^2)}\right)=2z_fr_f-r_f^2
\]
which means $l_1^2>z_f^2-(2z_fr_f-r_f^2)=(z_f-r_f)^2$. 
		
Again using Lemma \ref{lem:l1l2} we see
		\begin{align*}
			l_1^2& \geq 
			\frac{z_f^2}{2}(1+\sqrt{\frac{1}{n-k}-\frac{1}{n}})^2+\frac{z_f^2}{2}(1-\frac{1}{n-k})\\
			& =
			\frac{z_f^2}{2}(1+2\sqrt{\frac{1}{n-k}-\frac{1}{n}}+\frac{1}{n-k}-\frac{1}{n}+1-\frac{1}{n-k}) \\
			& =
			\frac{z_f^2}{2}(2+2\sqrt{\frac{1}{n-k}-\frac{1}{n}}-\frac{1}{n}) 
		 \geq
			z_f^2. 
		\end{align*}
Moreover from Lemma \ref{lem:l1l2}, 
		\begin{align*}
			l_2^2 & \leq 
			\frac{z_f^2}{2}(1+2\max\{\sqrt{\frac{1}{k}},\sqrt{\frac{1}{n-k}}\})^2+\frac{z_f^2}{2} \\
			& =
			z_f^2+2z_f^2\max\{\sqrt{\frac{1}{k}},\sqrt{\frac{1}{n-k}}\}+2z_f^2\max\{\sqrt{\frac{1}{k}},\sqrt{\frac{1}{n-k}}\}^2 \\
			& \leq
			 z_f^2+4z_f^2\max\{\sqrt{\frac{1}{k}},\sqrt{\frac{1}{n-k}}\} \\
			& \leq
			z_f^2+4z_f^2/\left(\frac{4z_f^2}{2z_fr_f+r_f^2}\right) 
			 =
			(z_f+r_f)^2.  
		\end{align*}
That is $l_1\in (z_f-r_f,z_f)$ and $l_2\in(z_f,z_f+r_f)$.
\end{proof}

\begin{lemma} \label{lb1}
Consider a shift-invariant drop kernel $K$.  
There exists a set $P$ of size $n$ so any subset $Q \subset P$ such that $\|\kde_P - \kde_Q\|_\infty \leq \eps$ requires that $k \geq n - O(\eps n)$, 
 assuming $\min\{n-1, n-(\frac{4z_f^2}{2z_fr_f+r_f^2})^2\} \geq k\geq \max\{\frac{z_f^2}{2(2z_fr_f-r_f^2)}, (\frac{4z_f^2}{2z_fr_f+r_f^2})^2, 1\}$.
\end{lemma}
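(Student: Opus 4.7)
The plan is to take the construction and analysis already set up in the preceding subsection and simply plug in the drop-kernel hypothesis. Concretely, I would fix the lower-bound instance $P=\{z_f e_i/\sqrt{2}\}_{i=1}^n$, and, given any candidate $Q\subset P$, exploit the coordinate-permutation symmetry of $P$ to assume without loss of generality that $Q=\{p_1,\dots,p_k\}$. The witness point at which I would certify the KDE error is the point $p$ constructed earlier (on the line through $\bar{p}$ and $\bar{p}_k$, at distance $z_f/\sqrt 2$ past $\bar{p}_k$), because the key Equation (\ref{kdeerr}) already tells us
\[
(\kde_Q-\kde_P)(p) = \Bigl(1-\tfrac{k}{n}\Bigr)\bigl(f(l_1)-f(l_2)\bigr),
\]
where $l_1=\|p-p_i\|$ for $i\le k$ and $l_2=\|p-p_i\|$ for $i>k$.

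The next step is to invoke Lemma \ref{lem:interval}: under the hypothesized range of $k$, we have $l_1\in(z_f-r_f,z_f)$ and $l_2\in(z_f,z_f+r_f)$. This is exactly the regime in which the drop-kernel property applies, so
\[
f(l_1)-f(l_2) > C_f.
\]
Combining with the identity above yields
\[
\bigl|(\kde_P-\kde_Q)(p)\bigr| \;>\; \Bigl(1-\tfrac{k}{n}\Bigr)C_f.
\]
Since $\|\kde_P-\kde_Q\|_\infty\le\eps$ by assumption, this forces $(1-k/n)C_f<\eps$, i.e.\ $k > n(1-\eps/C_f) = n-O(\eps n)$, which is the desired conclusion.

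There is essentially no hard obstacle here: the geometric work (computing $l_1,l_2$ and checking that they fall in the drop window) was already packaged into Lemmas \ref{lem:l1l2} and \ref{lem:interval}, and the algebraic reduction of $(\kde_Q-\kde_P)(p)$ to the clean form $(1-k/n)(f(l_1)-f(l_2))$ was already carried out in (\ref{kdeerr}). The only subtlety worth flagging in the write-up is the symmetry argument: one should note that because $P$ is invariant under permutation of coordinates, any $k$-element subset $Q$ is equivalent to $\{p_1,\dots,p_k\}$, so it is legitimate to fix $Q$ this way before choosing the test point $p=p(Q)$. After that, the proof is a one-line application of the drop property to the already-established identity.
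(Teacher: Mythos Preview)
Your proposal is correct and follows essentially the same argument as the paper: invoke Lemma~\ref{lem:interval} to place $l_1\in(z_f-r_f,z_f)$ and $l_2\in(z_f,z_f+r_f)$, apply the drop-kernel property to get $f(l_1)-f(l_2)>C_f$, plug this into (\ref{kdeerr}) to obtain $(\kde_Q-\kde_P)(p)\geq (1-k/n)C_f$, and solve for $k$. The only cosmetic difference is that you spell out the symmetry justification for fixing $Q=\{p_1,\dots,p_k\}$ inside the proof, whereas the paper states this once in the construction paragraph preceding the lemma.
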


\begin{proof}
From Lemma \ref{lem:interval} we have $l_1\in (z_f-r_f,z_f)$ and $l_2\in(z_f,z_f+r_f)$, thus by the definition of a drop kernel and (\ref{kdeerr}) we observe
\[
(\kde_Q-\kde_P)(p)
 =
(1-\frac{k}{n})(f(l_1)-f(l_2)) 
\geq 
(1-\frac{k}{n})C_f.  
\]
Thus if $\norm{\kde_Q-\kde_P}_\infty\leq \eps$, then $k\geq n-O(\eps n)$.	
\end{proof}

\begin{lemma}
Consider a shift-invariant somewhere-steep $K$.  
There exists a set $P$ of size $n$ so any subset $Q \subset P$ such that $\|\kde_P - \kde_Q\|_\infty \leq \eps$ requires that $k = \Omega(1/\eps^2)$, 
assuming $n/2 \geq k\geq \max\{\frac{z_f^2}{2(2z_fr_f-r_f^2)}, (\frac{4z_f^2}{2z_fr_f+r_f^2})^2, 1\}$.
\end{lemma}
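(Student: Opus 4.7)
The plan is to repeat the construction and evaluation used in Lemma~\ref{lb1}, but replace the drop-kernel step with the weaker somewhere-steep condition. By Lemma~\ref{lem:interval} (whose hypotheses are exactly the stated bounds on $k$), we still have $l_1\in(z_f-r_f,z_f)$ and $l_2\in(z_f,z_f+r_f)$, so the somewhere-steep property yields $f(l_1)-f(l_2)\geq C_f(l_2-l_1)$. Substituting into (\ref{kdeerr}) and using $k\leq n/2$ to get $1-k/n\geq 1/2$,
\[
(\kde_Q-\kde_P)(p)\;\geq\;\tfrac{1}{2}\,C_f\,(l_2-l_1).
\]
The whole task thus reduces to proving a quantitative lower bound $l_2-l_1=\Omega(z_f/\sqrt{k})$.

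To establish this, I would work with $l_2^2-l_1^2$, where Lemma~\ref{lem:l1l2} gives clean algebra. Setting $a=\sqrt{1/k-1/n}$ and $b=\sqrt{1/(n-k)-1/n}$, expanding $(1+a+b)^2$ and simplifying, the $-z_f^2/(2(n-k))$ term cancels part of the $b^2$ contribution to leave
\[
l_2^2-l_1^2 \;=\; z_f^2(a+b+ab)\;+\;\tfrac{z_f^2}{k}\;-\;\tfrac{z_f^2}{n}.
\]
Dropping the nonnegative $b$ and $ab$ terms, and using $k\leq n/2$ to see both $a\geq 1/\sqrt{2k}$ and $z_f^2/k-z_f^2/n\geq 0$, one obtains $l_2^2-l_1^2\geq z_f^2/\sqrt{2k}$. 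From Lemma~\ref{lem:interval} the denominator $l_1+l_2\leq 2(z_f+r_f)$ is a kernel-dependent constant, so
\[
l_2-l_1 \;=\; \frac{l_2^2-l_1^2}{l_1+l_2} \;=\; \Omega(z_f/\sqrt{k}).
\]
Plugging back, $\|\kde_Q-\kde_P\|_\infty\geq (\kde_Q-\kde_P)(p)=\Omega(1/\sqrt{k})$, so requiring this $\leq\eps$ forces $k=\Omega(1/\eps^2)$.

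The main obstacle is that second step. The steepness only gives us a linear factor $l_2-l_1$ in the kernel difference, and extracting the $1/\sqrt{k}$ rate from the expressions in Lemma~\ref{lem:l1l2} requires preserving the square-root structure of $a$ rather than Taylor-expanding, which would yield a weaker $1/k$ estimate (matching the drop-kernel bound but missing the $\Omega(1/\eps^2)$ conclusion). The hypothesis $k\leq n/2$ plays a precise role here: it ensures the $1/n$ and $1/(n-k)$ ``dilution'' terms never exceed $1/(2k)$, so they can be absorbed into the dominant $a\geq 1/\sqrt{2k}$ term without disturbing the asymptotic rate.
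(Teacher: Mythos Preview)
Your proposal is correct and follows essentially the same route as the paper: both arguments use Lemma~\ref{lem:interval} and the somewhere-steep condition to reduce to bounding $l_2-l_1$, then establish $l_2^2-l_1^2\geq z_f^2/\sqrt{2k}$ and divide by $l_1+l_2=O(z_f)$. The only cosmetic difference is that the paper derives the inequality on $l_2^2$ via a chain of lower bounds, whereas you compute $l_2^2-l_1^2$ exactly from Lemma~\ref{lem:l1l2} and then discard nonnegative terms; the resulting bounds and conclusion are identical.
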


\begin{proof}	
We can write $l_2^2$ in terms of $l_1^2$.  
	\begin{align*}
		l_2^2
		& =
		\norm{p-\bar{p}_{-k}}^2+\norm{\bar{p}_{-k}-p_i}^2
		\;\;\; \geq \;\;\; 
		\norm{p-\bar{p}}^2+\norm{\bar{p}_{-k}-p_i}^2\\
		& =
		\frac{z_f^2}{2}\left(1+\sqrt{\frac{1}{k}-\frac{1}{n}}\right)^2+\frac{z_f^2}{2}(1-\frac{1}{n-k}) \\
		& \geq
		\frac{z_f^2}{2}\left(1+\sqrt{\frac{1}{k}-\frac{1}{n}}\right)^2+\frac{z_f^2}{2}(1-\frac{1}{k})
		\;\;=\;\;
		\frac{z_f^2}{2}\left(1+2\sqrt{\frac{1}{k}-\frac{1}{n}}+(\frac{1}{k}-\frac{1}{n})+(1-\frac{1}{k})\right)\\
		& \geq
		\frac{z_f^2}{2}\left(1+2\sqrt{\frac{1}{k}-\frac{1}{n}}+(1-\frac{1}{k})\right)\\
		& =
		l_1^2+z_f^2\sqrt{(1-\frac{k}{n})}\sqrt{\frac{1}{k}}  
		\;\;\; \geq \;\;\;
		l_1^2+z_f^2\sqrt{\frac{1}{2k}}.  
	\end{align*}

From Lemma \ref{lem:interval} we have $l_1\in (z_f-r_f,z_f)$ and $l_2\in(z_f,z_f+r_f)$. So from the definition of a somewhere-steep kernel, we can conclude from (\ref{kdeerr})  and $n/2 \geq k$ that
	\begin{align*}
		(\kde_Q-\kde_P)(p)
		& =
		(1-\frac{k}{n})(f(l_1)-f(l_2)) 
	    \geq 
		\frac{1}{2}(f(l_1)-f(l_2)) 
		 =
		C_f(l_2-l_1)\\
		& = 
		C_f\left(\sqrt{l_1^2+z_f^2\sqrt{\frac{1}{2k}}}-l_1\right) 
		 =
		C_f\left(\sqrt{l_1^2+z_f^2\sqrt{\frac{1}{2k}}}+l_1\right)^{-1}z_f^2\sqrt{\frac{1}{2k}} \\
		& \geq 
		\frac{C_fz_f}{3}\sqrt{\frac{1}{2k}}.  
	\end{align*}
	If $\norm{\kde_Q-\kde_P}_\infty\leq \eps$, then $k=\Omega(1/\eps^2)$.
\end{proof}

\begin{theorem}
For the Gaussian or Laplace kernel, there is a set $P$ so for any subset $Q \subset P$ such that $\|\kde_P - \kde_Q\|_\infty \leq \eps$, then $|Q|=\Omega(1/\eps^2)$.  
\end{theorem}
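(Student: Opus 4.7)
The theorem is essentially a corollary of the preceding lemma on shift-invariant somewhere-steep kernels, so the plan is simply to verify the hypotheses of that lemma for the Gaussian and Laplace kernels, and then invoke it.

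First, I would check the somewhere-steep condition for each kernel. For the Gaussian, $f(z) = \exp(-z^2/\sigma^2)$ is continuously differentiable with derivative $f'(z) = -(2z/\sigma^2)\exp(-z^2/\sigma^2)$, which is strictly negative for every $z>0$. For any fixed choice like $z_f = \sigma$, the derivative is bounded away from zero in a neighborhood $(z_f - r_f, z_f + r_f)$ of $z_f$ for sufficiently small $r_f$; by the mean value theorem this gives a constant $C_f > 0$ with $f(z_1) - f(z_2) \ge C_f(z_2 - z_1)$ whenever $z_1 < z_f < z_2$ both lie in that neighborhood. The same argument applies to the Laplace kernel $f(z) = \exp(-z/\sigma)$, whose derivative $-(1/\sigma)\exp(-z/\sigma)$ is likewise uniformly bounded away from $0$ on any bounded interval. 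So both kernels are somewhere-steep with constants $C_f, z_f, r_f$ depending only on $\sigma$.

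Second, I would invoke the preceding lemma with these constants. Fix the resulting absolute constants $z_f, r_f, C_f$ and choose $n$ large enough so that the interval
\[
\max\Bigl\{\tfrac{z_f^2}{2(2z_fr_f-r_f^2)},\; \bigl(\tfrac{4z_f^2}{2z_fr_f+r_f^2}\bigr)^2,\; 1\Bigr\} \;\le\; k \;\le\; n/2
\]
is nonempty and, moreover, large enough that $\Omega(1/\eps^2) \le n/2$; any $n \ge c/\eps^2$ for a sufficiently large absolute constant $c$ suffices. The construction in Section~\ref{sec:LB} with this $n$ then produces a point set $P \subset \mathbb{R}^n$ such that, by the previous lemma, any subset $Q \subset P$ with $\|\kde_P - \kde_Q\|_\infty \le \eps$ must have $|Q| = k = \Omega(1/\eps^2)$.

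The only mild subtlety is confirming that the lower-bound constants on $k$ really are independent of $\eps$ (they depend only on kernel-intrinsic quantities $z_f, r_f$), and that the upper constraint $k \le n/2$ is consistent with a desired lower bound of order $1/\eps^2$. Both are handled by taking the ambient dimension $n = \Theta(1/\eps^2)$; this matches the table's claim that the construction requires $d = \Omega(1/\eps^2)$. Since no step beyond verifying somewhere-steepness requires new calculation, I do not expect any real obstacle — the work has been done in the earlier lemmas.
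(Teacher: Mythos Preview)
Your proposal is correct and matches the paper's approach: the paper states the theorem immediately after the somewhere-steep lemma with no separate proof, treating it as an obvious corollary once one notes that the Gaussian and Laplace kernels are somewhere-steep. Your write-up is in fact more explicit than the paper, since you spell out the mean-value-theorem verification of the somewhere-steep condition and the choice $n = \Theta(1/\eps^2)$ to make the range constraint $k \le n/2$ compatible with the conclusion $k = \Omega(1/\eps^2)$.
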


\bibliographystyle{plain}
\bibliography{discrepancy}

\end{document}